\documentclass[10pt,twocolumn,letterpaper]{article}
\usepackage{fancyhdr}

\usepackage{cvpr}
\usepackage{times}
\usepackage{epsfig}
\usepackage{graphicx}
\usepackage{amsmath}
\usepackage{amssymb}

\usepackage[caption=false,font=footnotesize]{subfig}
\usepackage{xcolor}
\usepackage{booktabs}

\newcommand{\adrian}{\textcolor[rgb]{0,0,0}}
\newcommand{\javier}{\textcolor[rgb]{0,0,0}}

\newcommand\best[1]{\textcolor[rgb]{0.05,0.5,0.02}{\textbf{#1}}}
\newcommand\second[1]{\textcolor[rgb]{1.0,0.5,0.1}{\textbf{#1}}}
\newcommand\third[1]{\textcolor[rgb]{0.25,0.25,0.8}{\textbf{#1}}}

\newcommand*{\qed}{\hfill\ensuremath{\square}}%
\newcommand{\beqn}{\begin{equation*}}
\newcommand{\eeqn}{\end{equation*}}
\newcommand{\beq}{\begin{equation}}
\newcommand{\eeq}{\end{equation}}

\newcommand{\I}{\mathrm{I}}
\newcommand{\J}{\mathrm{J}}

\newcommand{\T}{\mathrm{t}}

\newcommand{\A}{\mathrm{A}}
\newcommand{\G}{\mathrm{G}}
\newtheorem{Th}{Theorem}[section]
\newtheorem{lemma}[Th]{Lemma}

\newenvironment{proof}[1][Proof]{\begin{trivlist}
\item[\hskip \labelsep {\bfseries #1}]}{\end{trivlist}}
\makeatletter
\newcommand*\bigcdot{\mathpalette\bigcdot@{.5}}
\newcommand*\bigcdot@[2]{\mathbin{\vcenter{\hbox{\scalebox{#2}{$\m@th#1\bullet$}}}}}
\makeatother

\usepackage[pagebackref=true,breaklinks=true,letterpaper=true,colorlinks,bookmarks=false]{hyperref}

\cvprfinalcopy 


\ifcvprfinal\pagestyle{empty}\fi

\usepackage[a4paper,margin=0.5in]{geometry}
\usepackage{fancyhdr}
\pagestyle{fancy}
\lhead{\textbf{This Article has been Accepted on IEEE CVPR 2018}}
\rhead{}

\begin{document}

\title{On the Duality Between Retinex and Image Dehazing}

\author{Adrian Galdran\\
\normalsize{INESC TEC Porto, Portugal}\\
{\tt\small adrian.galdran@inesctec.pt}
\and
Aitor Alvarez-Gila\\
\normalsize{Tecnalia, Spain}\\
\normalsize{Computer Vision Center, Spain}\\
{\tt\small aitor.alvarez@tecnalia.com}
\and
Alessandro Bria\\
\normalsize{Dpt. of Electrical and Information Engineering}\\
\normalsize{University of Cassino and L.M., Italy}\\
{\tt\small a.bria@unicas.it}
\and
Javier Vazquez-Corral, Marcelo Bertalm\'io\\
\normalsize{Dpt. of Information and Communication Technologies}\\
\normalsize{Universitat Pompeu Fabra, Spain}\\
{\tt\small \{javier.vazquez, marcelo.bertalmio\}@upf.es}
}

\maketitle
\thispagestyle{fancy}

\begin{abstract}
Image dehazing deals with the removal of undesired loss of visibility in outdoor images due to the presence of fog. Retinex is a color vision model mimicking the ability of the Human Visual System to robustly discount varying illuminations when observing a scene under different spectral lighting conditions. Retinex has been widely explored in the computer vision literature for image enhancement and other related tasks. While these two problems are apparently unrelated, the goal of this work is to show that they can be connected by a simple linear relationship. Specifically, most Retinex-based algorithms have the characteristic feature of always increasing image brightness, which turns them into ideal candidates for effective image dehazing by directly applying Retinex to a hazy image \javier{whose intensities have been inverted}. In this paper, we give theoretical proof that Retinex on inverted intensities is a solution to the image dehazing problem. Comprehensive qualitative and quantitative results indicate that several classical and modern implementations of Retinex can be transformed into competing image dehazing algorithms performing on pair with more complex fog removal methods, and can overcome some of the main challenges associated with this problem.
\end{abstract}

\section{Introduction}

Outdoor images are often degraded by a loss of visibility produced by small particles lying in the piece of atmosphere in between the imaged scene and the observer. 
This physical phenomenon is known as haze, fog, or mist, and it causes the radiance captured by the camera to be attenuated along its path. 
Haze removal, or image dehazing, is an image processing task concerned with the mitigation of this effect, thereby increasing quality of outdoors images, with the goal of improving performance of further computer vision algorithms, or simply enhancing image visualization.


\javier{In turn, Retinex \cite{land_retinex_1977,land_lightness_1971} was originally defined as a color vision model of human perception. It aims to explain the human ability to perceive color as stable regardless of changes in global illumination. Retinex is based on the observation that color sensation is not related to the radiance values that reach the eye, but to the integrated reflectance. The integrated reflectance is defined as the ratio at each waveband between the value of the object and the value of a white object under the same illuminant. Retinex was promptly adapted by researchers in color photography due to its effectiveness for the enhancement of images \cite{mccann_retinex_2017}. Since then, variations of the Retinex model have been applied for many different image processing tasks, from non-uniform (local) color constancy \cite{ebner_color_2007}, to shadow removal \cite{finlayson_removing_2002}, gamut mapping \cite{mccann_color_2000}, or contrast enhancement \cite{vazquez-corral_image_2016}. In this paper, we consider Retinex as an image enhancement technique, in accordance with these last methods.}

\adrian{Retinex has been related to image dehazing in the past, either explicitly or implicitly. In \cite{xie_improved_2010}, multi-scale Retinex was applied to increase contrast in the luminance channel. The result was then median-filtered and used as an estimate of scene's depth. In \cite{rong_improved_2014}, single-scale Retinex was employed after a wavelet transform to enhance the chromatic aspect of the result, whereas in \cite{dravo_stress_2015} the Stress (Spatio-Temporal Retinex-inspired Envelope with Stochastic Sampling) framework was applied for image dehazing. Stress is a general image enhancement technique, and the authors adapt the behavior of the algorithm to achieve image dehazing through a heuristic adjustment of its parameters.}

\begin{figure*}[tp]
\begin{center}
\includegraphics[width = \textwidth]{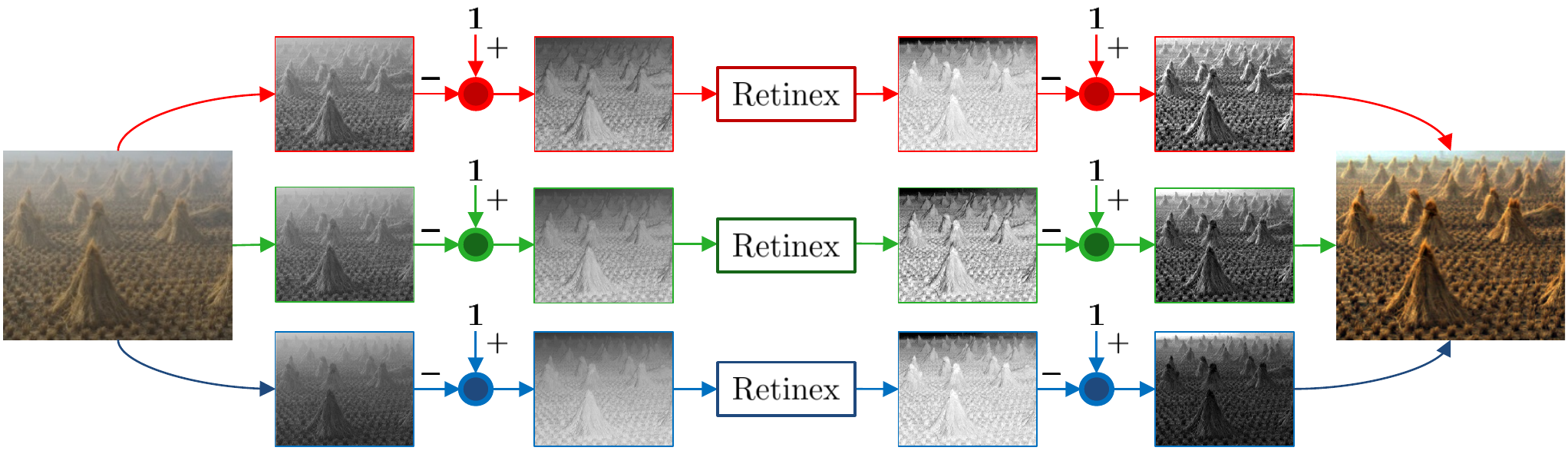}
\end{center}
\caption{A schematic description of the duality relationship between Retinex and Image Dehazing.}
\label{fig_retinex}
\end{figure*}


In contrast with previous works, in this paper we do not intend to adapt Retinex-like ideas to the essentially different problem of image dehazing. Instead, our main contribution is a formal proof of the following direct relationship between Retinex and image dehazing:

\begin{equation}\label{eq_fundamental_1}
\adrian{\mathrm{Dehazing}(\I) = 1-\mathrm{Retinex}(1-\I).}
\end{equation}
Furthermore, we show that this equivalence holds not only at the algorithmic level, but at the modelization level too. This enables the use of existing Retinex-based algorithms to dehazing images directly by incorporating two intensity-inversion operations. This means that we do not need to adjust or modify Retinex-based algorithms to perform image dehazing, we only need to \textit{transform their input} by simple intensity inversion operations. A schematic representation of this process is shown in Fig. \ref{fig_retinex}. In addition, we demonstrate, through a wide set of experimental results, that this new approach to image dehazing can compete surprisingly well with current state-of-the-art fog removal techniques.


\section{Previous Approaches to Image Dehazing}

Many image dehazing techniques have been proposed in recent years. They can be grouped in two main approaches: Machine Learning and Image Processing methods.

Machine Learning techniques learn visual features relevant for classifying an image as hazy or haze-free. 
These features can be manually specified~\cite{choi_referenceless_2015,tang_investigating_2014} or automatically learned in the framework of Deep Convolutional Neural Networks~\cite{cai_dehazenet:_2016,li_aod-net:_2017}. 
A model is then trained to learn a mapping between hazy and haze-free images.
In this case, training examples need to be annotated previously, which is a complex task.
A common approach consists of synthesizing hazy images from natural haze-free images, which is usually accomplished through a physical model of image acquisition under hazy conditions, due to Kochsmieder \cite{koschmieder_theorie_1925}: 
\beq\label{eq_haze_model}
\I(x)=\T(x) \J(x) + (1-\T(x)) \A,
\eeq
where $\I=(\I^R,\I^G,\I^B)$ is the degraded image, $\J$ are the intensities in a haze-free image, $\T$ is the medium transmission, a scalar quantity describing the amount of light that reaches the receiver, inversely related to depth, and $\A$ is a constant (RGB)-vector known as atmospheric light. 
The additive combined degradation of transmission and atmospheric light $\A(1-\T(x))$ is usually known as airlight, and it accounts for a possible shift in scene colors due to the presence of different sources of illumination other than sunlight.

Kochsmieder's model lies also at the heart of image dehazing techniques belonging to the category of Image Processing. 
In this case, the goal is to solve the above underconstrained model (\ref{eq_haze_model}) by building a prior assumption that is fulfilled by a haze-free image. 
This prior is then imposed on eq. (\ref{eq_haze_model}), in order to infer $\T$ and $\A$. Once estimates for $\T$ and $\A$ have been obtained, the eq. (\ref{eq_haze_model}) can be inverted:
\beq\label{eq_haze_inversion}
\J(x) = \frac{\I(x)-\A}{\T(x)}+\A.
\eeq
Image Processing techniques are thus spatially-variant contrast enhancement methods that attempt to increase detail visibility and saturation on degraded areas while leaving unaltered regions that already have good contrast. 
Several priors can be imposed on the structure of $\J$ in order to estimate $\T$ and $\A$. For instance the Dark Channel Prior \cite{he_single_2011} imposes that most local patches in a haze-free image $\J$ contain pixels which have very low intensity in at least one color channel:
\beq\label{dc}
\J^{dark}(x) = \min_{c\in \{R,G,B\}} \left(\min_{y\in\Omega(x)} \J^c(y)\right) \rightarrow 0,
\eeq
being $\Omega(x)$ a local neighborhood of $x$. 
Assuming the Dark Channel Prior is fulfilled by the haze-free image $\J$, we can take minima in eq. (\ref{eq_haze_model}) after normalizing by $\A$, cancel the term associated to $\J$, and recover an estimate of $\T$:
\beq\label{t_computation}
\T(x) = 1 - \min_{c\in \{R,G,B\}} \left(\min_{y\in\Omega(x)}\left(\frac{\I^c(x)}{\A^c}\right)\right).
\eeq
Other haze-free priors can be imposed on $\J$, such as maximal local contrast/saturation \cite{tan_visibility_2008}, or certain distribution of color pixels in the RGB space \cite{berman_non-local_2016,fattal_dehazing_2014}. 
Different alternatives exist: the reader can find in \cite{li_haze_2017,singh_comprehensive_2017} comprehensive reviews.

\adrian{A variation of the above methods consists of dehazing techniques attempting to recover the true physical radiance of the scene objects. These techniques typically require external sources of information \cite{kopf_deep_2008}, or multiple images of the same scene \cite{narasimhan_chromatic_2000,schechner_instant_2001}. Remarkably, in \cite{kratz_factorizing_2009,nishino_bayesian_2012} the authors overcome this need by a joint probabilistic estimation of depth and true radiance through a two-latent-layers Markov random field. The method requires radiometrically calibrated input, and assumes the atmospheric light $\A$ is known in advance, which can result in chromatic distortions \cite{sulami_automatic_2014}.}


%

\begin{figure*}[t]
\centering
\subfloat[]{\includegraphics[width = 0.225\textwidth]{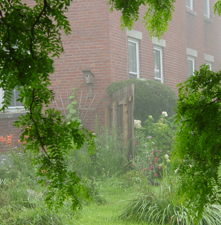}
\label{fig_retinex_increases_1}}
\hfil
\subfloat[]{\includegraphics[width = 0.225\textwidth]{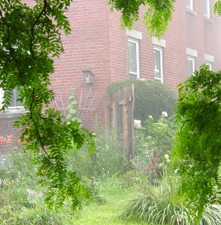}
\label{fig_retinex_increases_2}}
\hfil
\subfloat[]{\includegraphics[width = 0.225\textwidth]{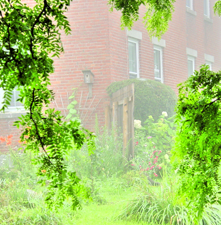}
\label{fig_retinex_increases_3}}
\hfil
\subfloat[]{\includegraphics[width = 0.225\textwidth]{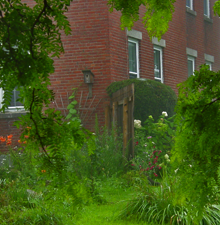}
\label{fig_retinex_increases_4}}
\caption{Retinex cannot decrease brightness. (a) Hazy image (b)-(d) Output of: (b) Random Spray Retinex \cite{provenzi_random_2007} (c) Multi-Scale Retinex \cite{jobson_multiscale_1997} (d) Multi-Scale Retinex used for image dehazing, following eq. (\ref{eq_fundamental_1}).}
\label{fig_retinex_increases}
\end{figure*}

\section{The Retinex Theory of Color Vision}\label{section2}
\javier{Edwin H. Land introduced the Retinex theory \cite{land_retinex_1977} as a color vision model of human perception. He named it Retinex as a portmanteau of Retina and Cortex, since Land did not want to venture where exactly this process was carried out in the visual pathway. In short, the original Retinex color vision model can be defined as \textit{a theoretical spectral channel that makes spatial comparisons between scene regions so as to calculate ``Lightness'' sensations} \cite{mccann_retinex_2017}. It became rapidly apparent to Land and his collaborators that Retinex was also useful for the enhancement of color photographs \cite{mccann_retinex_2017}, replacing the human cone-photoreceptors (L,M,S) by the camera sensors (R,G,B). From now on, we will focus on this second meaning of Retinex, that has been widely applied in image processing tasks \cite{ebner_color_2007,mccann_color_2000,tappen_estimating_2006}.}

\javier{When applied to digital color images, the Retinex model computes a triplet of lightness values $(l_R, l_G, l_B)$ for each pixel.} In the original Retinex implementation, lightness is computed through a chain of pixel intensity comparisons with respect to other image locations' intensities.
Land suggested that this comparison cannot occur directly, but needs to be computed by comparing adjacent pixels  \cite{land_retinex_1977}. Given an image $\I$ taking values in $]0,1]$, two points $x$, $y$, and a path $\gamma=\{y=z_0, z_1, \ldots, z_{n-1}, x=z_n\}$, we compute their ratio $\I(x)/\I(y)$ through consecutive ratios $r_i = \I(z_i)/\I(z_{i-1})$: 
\beq\label{chain}
l^\gamma(x) = \frac{\I(x)}{\I(y)} = \underbrace{\frac{\I(z_1)}{\I(y)}}_{r_1} \cdot\underbrace{\frac{\I(z_2)}{\I(z_1)}}_{r_2} \ldots \underbrace{\frac{\I(z_{n-1})}{\I(z_{n-2})}}_{r_{n-1}} \underbrace{\frac{\I(x)}{\I(z_{n-1})}}_{r_{n}}
\eeq
The unfolding of the $\I(x)/\I(y)$ computation is non-trivial due to the addition of two supplementary mechanisms, called \textit{threshold} and \textit{reset}. The \textit{threshold} mechanism sets to $1$ ratios in eq. (\ref{chain}) that are close to $1$: for a small $\tau$, when $|1-r_i| < \tau$ we set $r_i=1$. This disregards unwanted effects in lightness estimation due to a smooth spatially variant illumination. However, it has been shown that parameter $\tau$ is redundant in Retinex computations \cite{provenzi_mathematical_2005}. Ignoring it does not have a critical impact in the algorithm. Hence, in this work we will not consider threshold-based Retinex variants.

The \textit{reset} mechanism acts as follows: when the chain of computations in (\ref{chain}) reaches a pixel $z_j$ with intensity greater than all previous points in $\gamma$, the sequential product up to $z_j$ resets to $1$, and lightness computation restarts from it:
\begin{align}\label{chain_simplified}
l^\gamma(x) = \frac{\I(x)}{\I(y)} &= \overbrace{r_1 \cdot r_2 \ldots r_{j+1}}^1 \cdot r_j \ldots r_{n-1} \cdot r_n \\ &=  \frac{\I(z_{j+1})}{\I(z_{j})} \cdot \frac{\I(z_{j+2})}{\I(z_{j+1})} \ldots \frac{\I(x)}{\I(z_{n-1})} = \frac{\I(x)}{\I(z_j)}\nonumber
\end{align}
Eq. (\ref{chain_simplified}) shows that the chain of ratios (\ref{chain}) simplifies to $\I(x)/\I(z_{max})$, where $z_{max}$ is the pixel of maximum intensity along $\gamma$. This reveals the local white balance character of Retinex: points activating the reset mechanism become local references for white. 

The sequential product of eq. (\ref{chain}) is scaled by a non-decreasing function $f$, often a logarithm to simplify calculations, and gives an estimate of the lightness in $x$. To improve this estimate, $N$ paths ending at $x$ but starting at different initial points are considered, and the result is averaged, obtaining the Retinex lightness estimate at $x$:
\beq\label{retinex}
l(x) = \frac{1}{N} \sum_{k=1}^N f\left(\frac{\I(x)}{\displaystyle\max_{y\in\gamma^k}\I(y)}\right),
\eeq
where $\gamma^k\in\Gamma=\{\gamma^1,\ldots, \gamma^N\}$, a set of paths on the image domain. Starting from eq. (\ref{retinex}), one can easily show a central property of threshold-free Retinex: it increases brightness. \textit{i.e.} $l(x)\geq\I(x) \ \forall x$ \cite{provenzi_mathematical_2005}. 
This is illustrated in Fig. \ref{fig_retinex_increases}.

In the above form, Retinex contains unspecified parameters, such as the number of paths, or the way in which we sample the image to build them. 
Also, the reset mechanism in eq. (\ref{retinex}) makes much of the paths information redundant. 
In \cite{provenzi_random_2007}, path-based sampling was replaced by sampling through random sprays with radially decreasing density. In \cite{bertalmio_issues_2009}, random sprays wwere replaced by a 2-dimensional representation, with a kernel modeling the sampling density of the spray in the limit, leading to the Kernel-Based Retinex:
\beq\label{kbr}
l(x) = \sum_{y  |  \I(y)\geq\I(x)}\mkern-18mu \omega(x,y) f\left(\frac{\I(x)}{\I(y)}\right) + \sum_{y  |  \I(y)<\I(x)}\mkern-18mu \omega(x,y)
\eeq
where $\omega(x,y)$ models the probability of selecting pixel $y$ in the proximity of $x$, and the reset mechanism is automatically implemented, since $f$ is defined as $f(r)=1$ for $r>1$.

Center-surround techniques were first proposed in \cite{land_alternative_1986} as a simple alternative that still preserves the characteristic features of Retinex. They compute the ratio between image intensity at a pixel and its surrounding:
\beq\label{center_surround}
l(x)  = f\left(\frac{\I(x)}{ <\I(y), y\in \Omega(x)>_w }\right),
\eeq
where $<\cdot>_w$ is a weighted average operator. This amounts to integrating local information instead of sampling it. The first practical implementation of this idea was proposed in \cite{jobson_properties_1997}, where the average operator was a Gaussian kernel $\G_\sigma$:
\beq\label{ssr}
l(x)  = \log\left(\frac{\I(x)}{\G_\sigma*\I(x)}\right) = \log(\I(x)) - \log(\G_\sigma*\I(x)).
\eeq
The scaling function $f$ is here a logarithm. Homomorphic filtering can also be seen as a particular case of this model, in which the logarithm and the convolution occur in inverted order in the right-hand term of (\ref{ssr}). This was later extended to multi-scale Retinex \cite{jobson_multiscale_1997}, a normalized linear combination of (\ref{ssr}) applied with different standard deviations.

Many other flavors of Retinex have been proposed in the literature, e.g. variational \cite{kimmel_variational_2003} or non-local \cite{zhao_closed-form_2012} approaches. We refer to \javier{\cite{mccann_retinex_2017}} for a comprehensive review.

\section{The Duality between Retinex and Image Dehazing}\label{section3}
\adrian{
We begin by observing that any solution to the haze formation model should decrease the intensities of the input hazy image. 
This can be easily seen by rearranging (\ref{eq_haze_model}) into:
\beq\label{geometrical}
\T(x) = \frac{\A-\I(x)}{\A-\J(x)}.
\eeq
Since transmission lies always in $[0,1]$, then $\A-\I(x) \leq \A-\J(x)$, which implies $\J(x) \leq \I(x)$.
}

\adrian{At this point, it is useful to make a simplifying assumption on the haze formation model (\ref{eq_haze_model}). As often done in the image dehazing literature \cite{tarel_fast_2009}, we assume the input image is globally white-balanced,\textit{ i.e.} no chromatic component dominates the scene. This amounts to fixing $\A=(1,1,1)$ in eq. (\ref{eq_haze_model}), and a solution of the image dehazing problem can be rewritten, after a simple manipulation, as:
\beq\label{dehazing_operator}
\mathrm{Dehazing}(\I(x)) = \J(x) = \frac{\I(x)-1}{\T(x)} + 1.
\eeq
}

\begin{figure*}[tp]
\centering
\subfloat[]{\includegraphics[width = 0.256\textwidth]{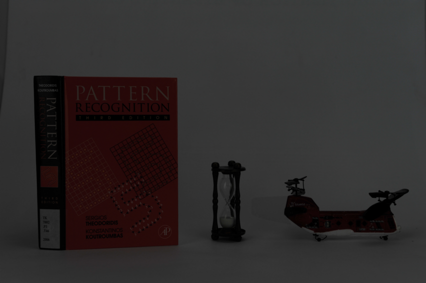}
\label{fig_deh_2}}
\hfil
\subfloat[]{\includegraphics[width = 0.256\textwidth]{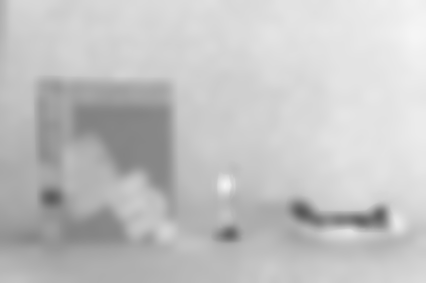}
\label{fig_deh_3}}
\hfil
\subfloat[]{\includegraphics[width = 0.256\textwidth]{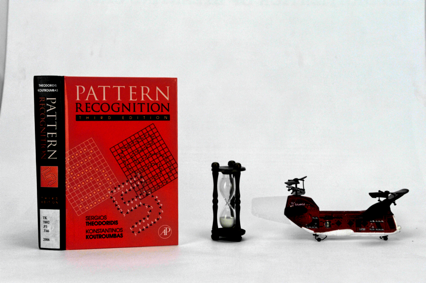}
\label{fig_deh_4}}
\caption{Dehazing as a method for illumination factorization. (a) Original image with irregular illumination. (b) Illumination computed by the dehazing method in \cite{he_single_2011} (scaled for better visualization). (c) Enhanced image with the method of \cite{he_single_2011}. Image from \cite{vonikakis_biologically_2013}.}
\label{fig_deh}
\end{figure*}

\adrian{
In this paper we consider Retinex as an image enhancement technique that can produce, imposing a local color constancy  hypothesis, a uniform illumination image from an image acquired under an irregular illumination:
\beq\label{inv_retinex_0}
\I(x)= \mathrm{Retinex}(\I(x)) \cdot \mathrm{i}(x),
\eeq
where $\mathrm{i}(x)$ is a slowly-varying illumination field affecting the scene. While Retinex produces good results in this ill-posed task, the property of always increasing intensity is a known limitation of most Retinex implementations: they are only able to enhance under-exposed images affected by shadows, while over-exposed images will not be enhanced. 
This limitation is usually circumvented by some further post-processing operations, typically image-dependent and hard to tune.} \adrian{In this work, we turn this limitation into an advantage through the definition of the following operator acting on an image with inverted intensities:
\beq\label{inv_retinex}
\mathrm{DehRet}: \I(x)\rightarrow  1-\mathrm{Retinex}(1-\I(x)).
\eeq
}

\adrian{Note that according to the above observations, if the $\mathrm{DehRet}(\bigcdot)$ solves the Image Dehazing problem, it must share the intensity decreasing property given by eq. (\ref{dehazing_operator}), i.e. the intensities of $\mathrm{DehRet}(\I)$ must be smaller than those of $\I$. This is demonstrated by the following lemma:}
\begin{lemma}
\adrian{
Operator $\mathrm{DehRet}(\bigcdot)$ always decreases intensities.
}
\end{lemma}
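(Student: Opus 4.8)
The plan is to reduce the claim directly to the single nontrivial property of threshold-free Retinex already recalled in the text just after eq. (\ref{retinex}): that Retinex never decreases intensity, i.e. $\mathrm{Retinex}(\I)(x) \geq \I(x)$ for every pixel $x$ (see \cite{provenzi_mathematical_2005}). Since $\mathrm{DehRet}$ is built in eq. (\ref{inv_retinex}) by sandwiching Retinex between two copies of the intensity-inversion map $\I \mapsto 1-\I$, and this map reverses order, I expect the brightness-increasing property to flip into exactly the brightness-decreasing property we want.

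Concretely, I would first set $\J(x) = 1 - \I(x)$, the inverted input to which Retinex is applied in eq. (\ref{inv_retinex}). Applying the brightness-increasing property to $\J$ gives $\mathrm{Retinex}(\J)(x) \geq \J(x) = 1 - \I(x)$. Subtracting both sides from $1$ reverses the inequality and yields
\beq
\mathrm{DehRet}(\I)(x) = 1 - \mathrm{Retinex}(1-\I(x)) \leq 1 - (1-\I(x)) = \I(x),
\eeq
which is precisely the assertion that $\mathrm{DehRet}$ decreases intensities, matching the necessary condition $\J \leq \I$ derived from eq. (\ref{geometrical}).

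There is essentially no hard step here: the whole content is the observation that conjugating a pointwise intensity-increasing operator by the order-reversing involution $x \mapsto 1-x$ produces an intensity-decreasing one. The only point I would be careful about is that the inverted image $1-\I$ remain a legitimate Retinex input. Since the paper assumes images take values in $]0,1]$, the inverted intensities lie in $[0,1[$; I would either note that the brightness bound of \cite{provenzi_mathematical_2005} still holds on this range, or restrict to strictly positive intensities so that the ratios $\I(x)/\max_{y}\I(y)$ appearing in eq. (\ref{retinex}) are well defined.
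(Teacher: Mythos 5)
Your proof is correct and is essentially identical to the paper's own argument: both apply the brightness-increasing property $\mathrm{Retinex}(1-\I) \geq 1-\I$ to the inverted image and then subtract from $1$ to flip the inequality into $\mathrm{DehRet}(\I) \leq \I$. Your added remark about the inverted intensities lying in $[0,1[$ (and the ratios in eq. (\ref{retinex}) remaining well defined) is a sensible precaution the paper glosses over, but it does not change the route.
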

\begin{proof}
\adrian{
Since Retinex increases intensities in any image $\I$, for the inverted image $1-\I$ we have $\mathrm{Retinex}(1-\I) \geq 1-\I$. This implies that $1 - \mathrm{Retinex}(1-\I) \leq \I$.\qed
}
\end{proof}

\adrian{
This justifies the suitability of the $\mathrm{DehRet}(\bigcdot)$ operator for the image dehazing task. Now we are ready to prove the central result of this paper.
}
\begin{Th}\label{inverted_retinex_simple}
\adrian{Applying operator $\mathrm{DehRet}(\bigcdot)$ to a hazy image provides a solution of the Image Dehazing problem (\ref{eq_haze_model}).}
\end{Th}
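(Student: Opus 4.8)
The plan is to prove something stronger than the Lemma: that $\mathrm{DehRet}(\I)$ does not merely decrease intensities but reproduces the inversion formula (\ref{dehazing_operator}) \emph{exactly}, for a transmission $\T$ that is automatically admissible. First I would strip Retinex down to its bare multiplicative form by taking the scaling $f$ to be the identity --- the logarithm in (\ref{ssr}) only rescales the output and the threshold is known to be redundant \cite{provenzi_mathematical_2005}. Written this way, each threshold-free variant of Section \ref{section2} reads
\beqn
\mathrm{Retinex}(\I)(x) = \frac{\I(x)}{R_\I(x)},
\eeqn
where $R_\I(x)\in\,]0,1]$ is the local reference built from $\I$: the path maximum after reset in (\ref{retinex})--(\ref{kbr}), or the weighted surround $\G_\sigma*\I$ in (\ref{center_surround})--(\ref{ssr}). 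Since $\I$ takes values in $]0,1]$, the brightening property $l(x)\geq\I(x)$ is in fact equivalent to $R_\I(x)\leq 1$, which holds because a maximum or weighted mean of values in $]0,1]$ cannot exceed $1$.

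Next I would substitute the inverted image $1-\I$ into this expression and simplify. Denoting by $R_{1-\I}(x)$ the reference computed on the inverted image, a single line gives
\beqn
\mathrm{DehRet}(\I(x)) = 1 - \frac{1-\I(x)}{R_{1-\I}(x)} = \frac{\I(x)-\big(1-R_{1-\I}(x)\big)}{R_{1-\I}(x)}.
\eeqn
I would then rewrite the target (\ref{dehazing_operator}) over a common denominator as $\mathrm{Dehazing}(\I(x)) = \big(\I(x)-(1-\T(x))\big)/\T(x)$ and match it term by term against the display above. The matching is forced and yields the single identification $\T(x)=R_{1-\I}(x)$: the transmission implicitly inverted by $\mathrm{DehRet}$ is exactly the Retinex reference of the inverted image.

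It remains to verify that this $\T$ is a physically valid transmission, $\T(x)\in\,]0,1]$. This follows from the same observation as in the first step: $1-\I$ takes values in $[0,1[$, so its reference $R_{1-\I}(x)$ --- maximum, weighted average, or convolution with a normalized kernel --- again lies in $]0,1]$, with the lower endpoint attainable only for the degenerate all-white input $\I\equiv 1$, which carries no haze. With $\T$ admissible, $\mathrm{DehRet}(\I)$ coincides term-for-term with the solution (\ref{dehazing_operator}) of the Koschmieder model (\ref{eq_haze_model}) under $\A=(1,1,1)$, proving the claim. As a confirmation I would specialize to the path/spray form, where $R_{1-\I}(x)=\max_{y}\big(1-\I(y)\big)=1-\min_{y}\I(y)$, so that $\T(x)=1-\min_{y}\I(y)$ recovers the Dark Channel Prior transmission (\ref{t_computation}) and anchors the construction to a familiar dehazing estimator.

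The hard part is not the algebra --- once Retinex is in the ratio form the matching is automatic --- but the modelling step that precedes it: choosing a single multiplicative template valid across the path-based, kernel-based and center-surround variants, and arguing that discarding $f$ and the threshold leaves the structural identity intact. I expect the range check $R_{1-\I}(x)\in\,]0,1]$ to be the one place where the value range $]0,1]$ of the input is genuinely used.
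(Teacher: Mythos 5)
Your route is genuinely different from the paper's --- you work at the implementation level, reducing Retinex to the ratio form $\I/R_\I$ and matching terms against (\ref{dehazing_operator}) --- but as written it has a real gap. What your matching argument establishes is that $\mathrm{DehRet}(\I)$ satisfies the Koschmieder model (\ref{eq_haze_model}) for \emph{some} admissible transmission, namely $\T=R_{1-\I}$. The trouble is that this property is equivalent to the intensity-decreasing property of the paper's Lemma, and is therefore shared by essentially any darkening operator: given any output $K(x)\leq\I(x)$, setting $\T(x)=(1-\I(x))/(1-K(x))$ produces a transmission in $[0,1]$ for which $K$ ``solves'' the model --- e.g.\ $K=\I/2$ qualifies. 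Consistency with the underconstrained model for some $\T\in[0,1]$ thus cannot be the content of the theorem; the theorem asserts that $\mathrm{DehRet}(\I)$ equals the haze-free scene $\J$ (the paper's proof explicitly sets $\J=\mathrm{Dehazing}(\I)$ and concludes $\mathrm{DehRet}(\I)=\J$), so a proof must connect the Retinex reference to the \emph{true} transmission. You only verify the range $R_{1-\I}\in\,]0,1]$, never its relation to $\T$. Two telltale signs: your argument never uses the hypothesis that $\I$ is hazy, and never uses the one physical fact that makes the duality work, namely that $\T$ is piecewise smooth.

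The paper's proof supplies exactly this missing link. Rearranging the model as $1-\I(x)=\T(x)(1-\J(x))$ (eq.~(\ref{eq_th3})), it observes that the inverted hazy image is a product of a piecewise-smooth ``illumination'' $\T$ with a ``reflectance'' $1-\J$, and then invokes the defining modelization property (\ref{inv_retinex_0}) of Retinex as an illumination-factorization operator to conclude $\mathrm{Retinex}(1-\I)=1-\J$ exactly, hence $\mathrm{DehRet}(\I)=\J$. To repair your argument you would need the analogous step: show that on inputs of the form $\T\cdot(1-\J)$ the reference $R_{1-\I}$ recovers (or approximates) $\T$, i.e.\ that the Retinex illumination estimate is faithful on smooth fields. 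Your closing observation is the right germ for this: for path/spray Retinex, $R_{1-\I}(x)=1-\min_y \I(y)$ coincides with the Dark Channel transmission estimate (\ref{t_computation}) --- this is the paper's ``Dark Channel as Retinex'' subsection run in reverse --- which anchors $R_{1-\I}$ to an accepted estimator of the true $\T$ under an accepted prior on $\J$. Promoting that from a confirmation to the backbone of the proof, one Retinex variant at a time, is what is missing; your term-matching algebra, while correct, is not by itself the theorem.
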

\begin{proof}
Assuming $\A=(1,1,1)$, the haze formation model can be written as:
\beq\label{eq_th1}
\I(x)=\T(x)\J(x)+1-\T(x),
\eeq
where $\J(x)$ is a solution to the dehazing problem, \textit{i.e.} $\J(x) = \mathrm{Dehazing}(\I(x))$. Eq. (\ref{eq_th1}) can be rearranged as:
\beq\label{eq_th2}
1-\I(x) = 1- \T(x) \J(x) - 1 + \T(x) = \T(x)(1-\J(x)).
\eeq
Consider a second image $\widetilde{\I}(x)$ resulting of inverting the intensities of the initial hazy image $\I(x)$, i.e. $\widetilde{\I}(x) = 1-\I(x)$. Eq. (\ref{eq_th2}) can be written as:
\beq\label{eq_th3}
\widetilde{\I}(x) = \T(x)(1-\J(x)).
\eeq
Since $\T(x)$ is piecewise smooth, application of a Retinex method can remove $\T(x)$ from eq. (\ref{eq_th3}), resulting in:
\beq
\mathrm{Retinex}(\widetilde{\I}(x)) = 1-\J(x),
\eeq
which implies:
\beq
\J(x) = 1-\mathrm{Retinex}(\widetilde{\I}(x)).
\eeq
But $\J(x)$ was a solution for the image dehazing problem:
\beq\label{fundamental_result}
\mathrm{Dehazing}(\I(x)) = 1-\mathrm{Retinex}(1-\I(x)),
\eeq
which shows the initial statement.\qed
\end{proof}

\adrian{
The implications of this relationship are manifold. 
First, the above connection between Retinex and Image Dehazing has the advantage that it is valid not only at an algorithmic level, but also at a modelization level. 
It provides a powerful mechanism by which, if we have a numerical technique to solve Retinex, we can solve Dehazing by applying it to inverse intensities and inverting the result.
}

\adrian{
Second, since eq. (\ref{eq_fundamental_1}) holds, a question arises: is it possible to employ dehazing techniques to solve (\ref{inv_retinex_0}), \emph{i.e.} can dehazing on inverted intensities remove a smooth illumination field from an irregularly illuminated image?
In the considered case of a neutral-color illumination, through a change of variables $\I\rightarrow 1-\I$, eq. (\ref{eq_fundamental_1}) can be re-written as:
\begin{equation}\label{eq_fundamental_2}
1-\mathrm{Dehazing}(1-\I) = \mathrm{Retinex}(\I).
\end{equation}
This implies that inverting the result of running a dehazing method on inverted intensities will return an illumination-free image.
In addition, it can be easily shown that the operator $\I \rightarrow 1-\mathrm{Dehazing}(1-\I)$ is non-decreasing. This means that this operator can be applied to remove illumination, although it will only work for under-exposed images.
}

\adrian{
Indeed, eqs. (\ref{fundamental_result}) and (\ref{eq_fundamental_2}) build up a bidirectional image processing tool. 
Not only can algorithms for illumination factorization be applied to remove fog, but also image dehazing techniques can factor out non-uniform illumination from under-exposed images, as Retinex does. 
An example of the effect of applying formula (\ref{eq_fundamental_2}), with the dehazing method from \cite{he_single_2011}, is shown in Fig. \ref{fig_deh}. This idea has been recently explored in several works related to low-light image enhancement \cite{panagopoulos_estimating_2010,cai_joint_2017,li_low-light_2015,guo_lime:_2017}. 
The above result can be regarded as providing a theoretical support to these works.
}


\begin{figure*}[t]
\begin{center}
\subfloat[]{\includegraphics[width = 0.24\textwidth]{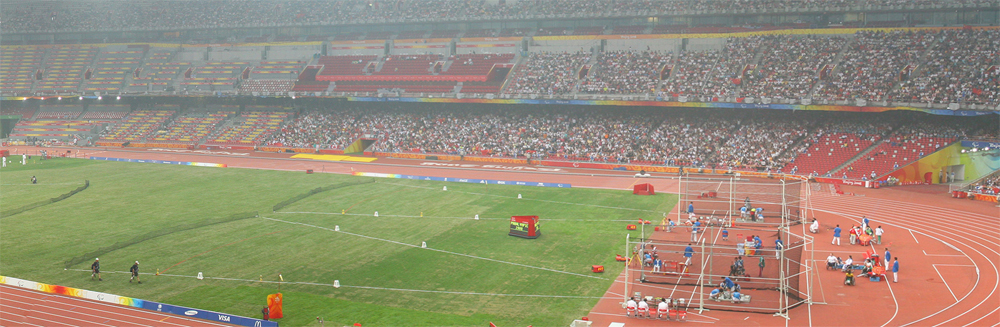}
\label{fig_retinex_increases_1}}
\hfil
\subfloat[]{\includegraphics[width = 0.24\textwidth]{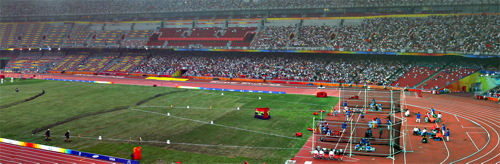}
\label{fig_retinex_increases_2}}
\hfil
\subfloat[]{\includegraphics[width = 0.24\textwidth]{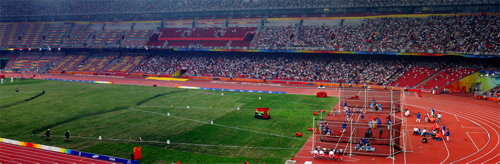}
\label{fig_retinex_increases_3}}
\hfil
\subfloat[]{\includegraphics[width = 0.24\textwidth]{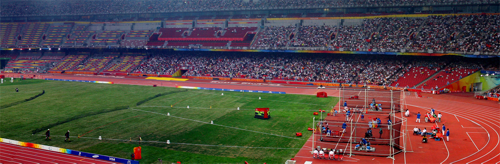}
\label{fig_retinex_increases_4}}

\subfloat[]{\includegraphics[width = 0.24\textwidth]{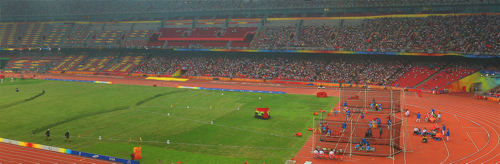}
\label{fig_retinex_increases_5}}
\hfil
\subfloat[]{\includegraphics[width = 0.24\textwidth]{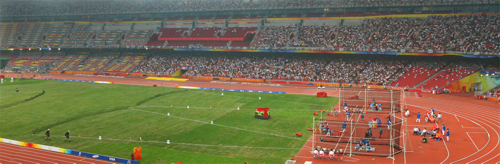}
\label{fig_retinex_increases_6}}
\hfil
\subfloat[]{\includegraphics[width = 0.24\textwidth]{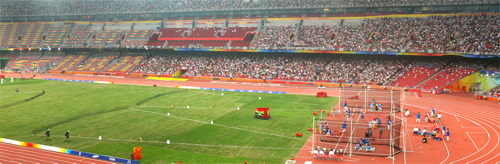}
\label{fig_retinex_increases_1}}
\hfil
\subfloat[]{\includegraphics[width = 0.24\textwidth]{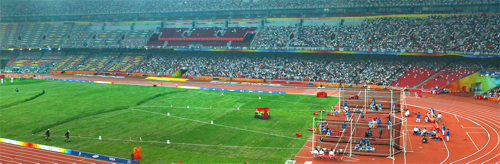}
\label{fig_retinex_increases_2}}

\subfloat[]{\includegraphics[width = 0.24\textwidth]{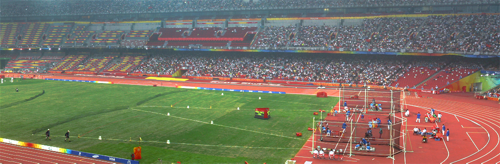}
\label{fig_retinex_increases_3}}
\hfil
\subfloat[]{\includegraphics[width = 0.24\textwidth]{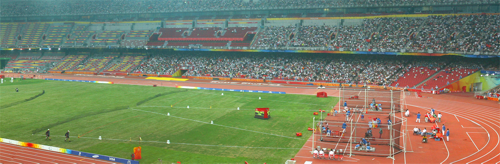}
\label{fig_retinex_increases_4}}
\hfil
\subfloat[]{\includegraphics[width = 0.24\textwidth]{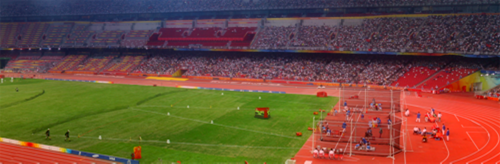}
\label{fig_retinex_increases_5}}
\hfil
\subfloat[]{\includegraphics[width = 0.24\textwidth]{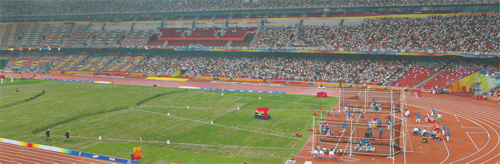}
\label{fig_retinex_increases_6}}
\end{center}
\caption{Results of Dehazing by means of Retinex on inverted intensities (marked with asterisk and boldface) as compared to several popular Image Dehazing techniques. (a) Original (b) \textbf{MSR$^\ast$}\cite{jobson_multiscale_1997} (c) \textbf{RSR$^\ast$}\cite{provenzi_random_2007} (d) \textbf{LRSR$^\ast$}\cite{banic_light_2013} (e) \textbf{HF$^\ast$} (f) \textbf{WVRI$^\ast$} \cite{fu_weighted_2016} (g) DCP \cite{he_single_2011} (h) BCCR\cite{meng_efficient_2013} (i) DEFADE \cite{choi_referenceless_2015} (j) CAP\cite{zhu_fast_2015} (k) RAS\cite{chen_robust_2016} (l) FVR\cite{tarel_fast_2009}.}
\label{fig_stadium}
\end{figure*}

\subsection{Dark Channel as Retinex}
\adrian{
Consider a monochromatic hazy image acquired under neutral illumination. From eq. (\ref{t_computation}), transmission reduces to:
\beq\label{t_computation_simple}
\T(x) = 1 - \min_{y\in\Omega(x)}(\I(x)).
\eeq
Consider $r\in [0,1]$, and a neighborhood around it given by $V(r)\subset[0,1]$. The following property holds:
\beq
\displaystyle\min_{s\in V(r)} \ s = 1-\max_{s\in V(r)}(1-s)
\eeq
Thus, the transmission that the Dark Channel computes from an image after inverting its intensities is given by:
\beq
\T_{1-\I}(x) = 1 - \min_{y\in\Omega(x)}(1-\I(y)) = \max_{y\in\Omega(x)}(\I(y)).
\eeq
It becomes apparent now that the solution the Dark Channel computes for the Retinex problem relates the denominator of the  haze inversion formula (\ref{eq_haze_inversion}) to the denominator of the Retinex equation (\ref{retinex}). 
However, in this case the scaling function $f$ is the identity and the geometry of the neighborhoods is the simplest one: square neighborhoods with no weighting factor. Hence the need for refining $\T$ that affects this algorithm, as well as other techniques derived from it.
}

\section{Experiments and Results}
The connections demonstrated in section \ref{section3} are not tied to one specific Retinex method, but they hold at a fundamental level. 
Hence, in order to verify the validity of the proposed image dehazing approach, we only require that the applied algorithm is able to separate a smoothly variant illumination field from the reflectance of the scene, in a way consistent with the assumptions outlined in section \ref{section3}. For this reason, we have selected four different popular implementations of Retinex: Single Scale Retinex (\textbf{SSR})  \cite{jobson_properties_1997}, Multi-Scale Retinex (\textbf{MSR}) \cite{jobson_multiscale_1997}, Random Spray Retinex (\textbf{RSR}) \cite{provenzi_random_2007} and its faster version, Light Random Spray Retinex (\textbf{LRSR}) \cite{banic_light_2013}. In addition, we include the Homomorphic Filtering (\textbf{HF}), which can be interpreted as a member of the Retinex family, as well as a recent illumination-reflectance separation technique (\textbf{WVRI}) \cite{fu_weighted_2016}.

These techniques are executed on inverted intensities, and inverted afterwards, following Theorem \ref{inverted_retinex_simple}. We deliberately prefer not to perform extensive parameter optimization over Retinex implementations, so as to show their general behavior for the Image Dehazing task. 
Since MSR and SSR operate on the logarithmic domain, we map back their results to $[0,1]$ by simple affine translation, saturating a small percentage of pixels at both extremes ($1\%$). This operation is applied on the result of $\mathrm{Retinex(1-\I)}$, maintaining the property of not decreasing intensity values of Retinex. The remaining parameters are fixed as the default values proposed by the respective authors ($\sigma_{L}=15$, $\sigma_{M}=80$, $\sigma_{H}=250$ for MSR and $\sigma=80$ for SSR). The spray size for both $RSR$ and $LRSR$ is set to $n=75$, with $N=20$ and $N=1$ number of sprays, respectively. For LRSR, kernel sizes are $k1=k2=25$, and row and column step sizes are both $1$.
The remaining methods were also executed with the baseline parameter configuration provided by their respective authors.

Below we compare both qualitatively and quantitatively the result of our proposed approach with a wide set of well-established image dehazing techniques: the popular Dark Channel Prior (\textbf{DCP}) \cite{he_single_2011}, the Fast Visibility Restoration (\textbf{FVR}) technique of \cite{tarel_fast_2009}, Image Dehazing with Robust Artifact Suppression (\textbf{RAS}) \cite{chen_robust_2016}, \textbf{DEFADE} \cite{choi_referenceless_2015}, Bayesian Defogging (\textbf{BYD}) \cite{nishino_bayesian_2012}, the Boundary-Constrained Contextual Regularization technique (\textbf{BCCR}) \cite{meng_efficient_2013}, \textbf{EVID} \cite{galdran_enhanced_2015}, \textbf{FVID} \cite{galdran_fusion-based_2017}, and the Color Attenuation Prior (\textbf{CAP}) technique \cite{zhu_fast_2015}. We also consider Histogram Equalization, to analyze the comparative performance of a simple contrast enhancement method. We must stress that our goal is not to produce results largely improving those of the Image Dehazing state-of-the-art, but to demonstrate the general usability of existing Retinex implementations for the task of fog removal.

\subsection{Qualitative Evaluation}

\begin{figure*}[t]
\centering
\subfloat[]{\includegraphics[width = 0.155\textwidth]{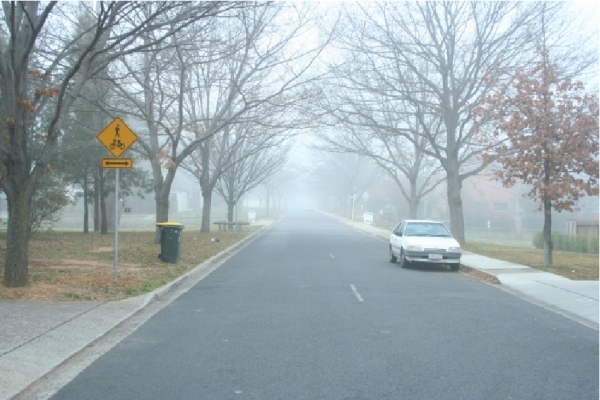}
\label{fig_retinex_increases_1}}
\hfil
\subfloat[]{\includegraphics[width = 0.155\textwidth]{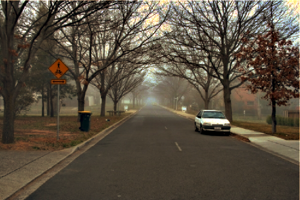}
\label{fig_retinex_increases_2}}
\hfil
\subfloat[]{\includegraphics[width = 0.155\textwidth]{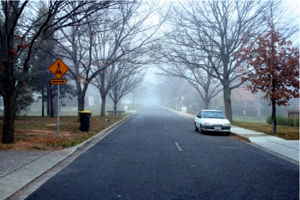}
\label{fig_retinex_increases_3}}
\hfil
\subfloat[]{\includegraphics[width = 0.155\textwidth]{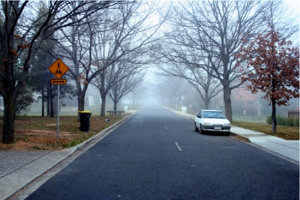}
\label{fig_retinex_increases_4}}
\hfil
\subfloat[]{\includegraphics[width = 0.155\textwidth]{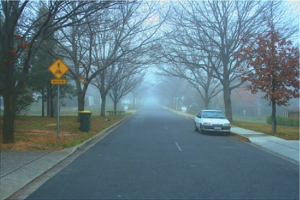}
\label{fig_retinex_increases_5}}
\hfil
\subfloat[]{\includegraphics[width = 0.155\textwidth]{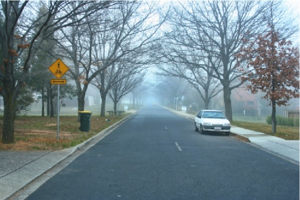}
\label{fig_retinex_increases_6}}

\subfloat[]{\includegraphics[width = 0.155\textwidth]{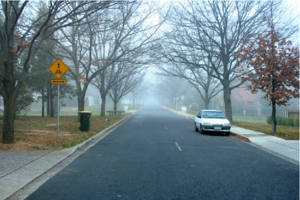}
\label{fig_retinex_increases_1}}
\hfil
\subfloat[]{\includegraphics[width = 0.155\textwidth]{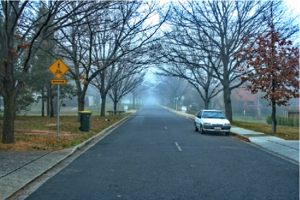}
\label{fig_retinex_increases_2}}
\hfil
\subfloat[]{\includegraphics[width = 0.155\textwidth]{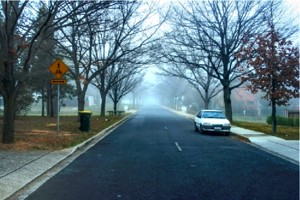}
\label{fig_retinex_increases_3}}
\hfil
\subfloat[]{\includegraphics[width = 0.155\textwidth]{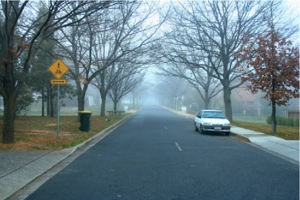}
\label{fig_retinex_increases_4}}
\hfil
\subfloat[]{\includegraphics[width = 0.155\textwidth]{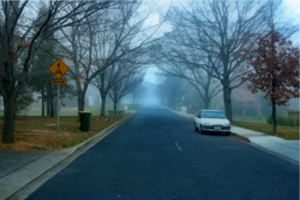}
\label{fig_retinex_increases_5}}
\hfil
\subfloat[]{\includegraphics[width = 0.155\textwidth]{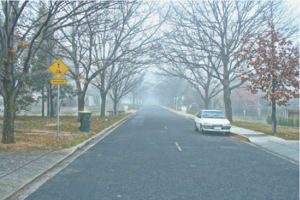}
\label{fig_retinex_increases_6}}
\caption{Results of Dehazing by means of Retinex on inverted intensities (marked with asterisk and boldface) as compared to several popular Image Dehazing techniques. (a) Original (b) \textbf{MSR$^\ast$}\cite{jobson_multiscale_1997} (c) \textbf{RSR$^\ast$}\cite{provenzi_random_2007} (d) \textbf{LRSR$^\ast$}\cite{banic_light_2013} (e) \textbf{HF$^\ast$} (f) \textbf{WVRI$^\ast$} \cite{fu_weighted_2016} (g) DCP \cite{he_single_2011} (h) BCCR\cite{meng_efficient_2013} (i) DEFADE \cite{choi_referenceless_2015} (j) CAP\cite{zhu_fast_2015} (k) RAS\cite{chen_robust_2016} (l) FVR\cite{tarel_fast_2009}.}
\label{fig_road}
\end{figure*}

In this section, we show several visual examples of the application of operator $\mathrm{DehRet}(\bigcdot)$ with the Retinex algorithms mentioned above, compared to the result of applying Image Dehazing techniques\footnote{Further qualitative results can be found in the supplementary material}. Fig. (\ref{fig_stadium}) displays a first example of such results. As predicted by Theorem (\ref{inverted_retinex_simple}), Retinex-based techniques can improve visibility to an extent similar to that of other specialized fog removal algorithms, showing good contrast and saturation on areas that are far away from the camera. Even the simple Homomorphic Filtering has a good performance in retrieving visibility in those areas.

Figure (\ref{fig_road}) provides an interesting example. Application of formula (\ref{inv_retinex}) leads again to a visibility increase on areas of the scene's bottom. In this case, the result of different implementations of Retinex produces colors that are sometimes unnatural. This is related to the per-channel processing Retinex performs. The role of the atmospheric light in eq. (\ref{eq_haze_model}) is ignored in this implementation, leading to a disparate color recovery in different images. Although the performance of Retinex under the presence of color shifts is reasonable, the relationship between Retinex and Image Dehazing when the term $\A$ is considered is complex, and remains a topic of future research.


\subsection{Quantitative Evaluation}
There exist two different approaches to quantitatively assess the quality of an image dehazing method, namely by full-reference metrics, and by no-reference metrics. 
In the first case, a ground-truth optimal solution is assumed to exist, and the error between the result of a dehazing technique and its corresponding clean scene can be computed.  
In the second case, a score describing the quality of a hazy image and its dehazed counterpart can be analyzed without the need of a clear version of the original image.
Below we follow both approaches to verify the applicability of the $\mathrm{DehRet}(\bigcdot)$ operator as defined on eq. (\ref{inv_retinex}) to increase the visual quality of images degraded by haze.

\subsubsection{Full-Reference Quality Assessment}
We first assess the performance of Retinex-based techniques for the dehazing problems by means of full-reference metrics. 
We use a set of outdoors images on which synthetic fog is added through perturbed versions of the haze formation model of eq. (\ref{eq_haze_model}), following \cite{galdran_enhanced_2015}. 
In this dataset it is possible to compute full-reference error measurements. 
Table \ref{ssim_table} shows the obtained results after applying all considered image dehazing and Retinex-based techniques, and measuring deviation with respect to the haze-free groundtruth image in terms of the well-known Structural Similarity Index (SSIM) \cite{wang_image_2004}, Color Peak Signal-to-Noise Ratio (CPSNR), and $\Delta E_{00}$ \cite{sharma_ciede2000_2005} mean errors across the dataset. Numerical results confirm that the proposed approach shows a dehazing capability in line with that of current fog removal methods, sometimes even outperforming it. Overall, the best-performing techniques were the DCP \cite{he_single_2011} and the weighted variational method for illumination separation from \cite{fu_weighted_2016}, acting on inverted intensities. These methods achieved a first and a second place under two different metrics. First, second, and third best performing methods were relatively well-distributed between image dehazing and Retinex-based techniques, which supports the hypothesis that Retinex methods can compete with specialized fog removal algorithms.

\begin{table*}[t]
\centering
\begin{tabular}{lcccccccc}
\textbf{Method}&                BYD \cite{nishino_bayesian_2012}&  HE&  DCP \cite{he_single_2011} &  EVID \cite{galdran_enhanced_2015}&  FVID \cite{galdran_fusion-based_2017}&  FVR\cite{tarel_fast_2009}&  BCCR\cite{meng_efficient_2013}&  DEFADE \cite{choi_referenceless_2015}  \\[1mm]\hline\\[-2.5mm]
\textbf{SSIM}    & 0.489 & 0.671 & \best{0.808} & 0.763 & 0.781 & 0.775 & \second{0.792} & 0.0.716\\[1.5mm]
\textbf{CPSNR}   & 11.569 & 12.368 & 15.479 & 15.086 & 14.695 & 15.261 & 16.085  & 14.210 \\[1.5mm]
\textbf{$\Delta E_{00}$}    &  20.312 & 18.459 & \second{10.499} & 13.422 & 13.530 & 12.387 & 11.239 & 14.098 \\[1.5mm]\hline\hline\\[-2.5mm]
\textbf{Method} \hspace{0.5cm}  & CAP\cite{zhu_fast_2015}&  RAS\cite{chen_robust_2016}&  \textbf{HF$^\ast$}&  \textbf{WVRI$^\ast$} \cite{fu_weighted_2016}&  \textbf{SSR$^\ast$}\cite{jobson_properties_1997}&  \textbf{MSR$^\ast$}\cite{jobson_multiscale_1997}&  \textbf{RSR$^\ast$}\cite{provenzi_random_2007}&  \textbf{LRSR$^\ast$}\cite{banic_light_2013}  \\[1mm]\hline\\[-2.5mm]
\textbf{SSIM}    & 0.709 & 0.507 & 0.782 & 0.702 & 0.733 & 0.742 & 0.739 & \third{0.788} \\[1.5mm]
\textbf{CPSNR}     & \third{16.160} & 14.483 & \best{16.526} & \second{16.388} & 13.887 & 14.497 & 15.674 & 15.741 \\[1.5mm]\textbf{$\Delta E_{00}$}   & \third{10.501} & 14.392 & 10.586 & \best{10.425} & 16.346 & 15.162 & 12.135 & 12.007
\\[1.5mm]\bottomrule\\[0.1mm]
\end{tabular}
\caption{SSIM/CPSNR/$\Delta E_{00}$ errors for synthetic foggy images from \cite{galdran_enhanced_2015}. For each metric, best method is marked \best{green}, second best is marked \second{orange}, and third best is marked \third{blue}. Methods based on our \emph{Retinex for Dehazing} approach are marked bold with a $^\ast$ sign.}\label{ssim_table}
\end{table*}

\subsubsection{No-Reference Quality Assessment}
\begin{table*}[t]
\centering
\begin{tabular}{lcccccccc}
\textbf{Method}&         None&  HE&  DCP \cite{he_single_2011} &  EVID \cite{galdran_enhanced_2015}&  FVID \cite{galdran_fusion-based_2017}&  FVR\cite{tarel_fast_2009}&  BCCR\cite{meng_efficient_2013}&  DEFADE \cite{choi_referenceless_2015}  \\[1mm]\hline\\[-2.5mm]
\textbf{FADE-score}    & 1.556 & 1.125 & 0.870 & 0.691 & 0.930 & 0.748 & \second{0.564} & \best{0.517}\\[1.5mm]
\textbf{$\mathbf{e}$-score}    & - & 1.477 & 0.953 & 1.481 & 0.753 & 1.181 & 1.612  & 0.923 \\[1.5mm]
\textbf{$\mathbf{r}$-score}    & -    &1.853 & 1.1513 & 1.855 & 1.307 & 1.931 & \third{1.977}  & 1.434 \\[1.5mm]
\textbf{$\mathbf{\sigma}$-score}    &  - & 1.019 & 0.095 & \second{0.018} & 0.072 & 0.150 & 0.353 & 5.727 \\[1.5mm]\hline\hline\\[-2.5mm]
\textbf{Method} \hspace{0.5cm}  & CAP\cite{zhu_fast_2015}&  RAS\cite{chen_robust_2016}&  \textbf{HF$^\ast$}&  \textbf{WVRI$^\ast$} \cite{fu_weighted_2016}&  \textbf{SSR$^\ast$}\cite{jobson_properties_1997}&  \textbf{MSR$^\ast$}\cite{jobson_multiscale_1997}&  \textbf{RSR$^\ast$}\cite{provenzi_random_2007}&  \textbf{LRSR$^\ast$}\cite{banic_light_2013}  \\[1mm]\hline\\[-2.5mm]
\textbf{FADE-score}    & 1.048 & 0.625 & 0.941 & 0.823 & 0.644 & \third{0.575} & 0.677 & 0.654 \\[1.5mm]
\textbf{$\mathbf{e}$-score}     & 0.241 & 0.1044 & 1.207 & 0.292 & \third{1.861} & \second{2.056} & \best{2.421} & 1.341 \\[1.5mm]\textbf{$\mathbf{r}$-score}    & 1.033 & 0.891 & 1.146 & 1.072 & \best{2.137} & \second{2.077} & 1.702 & 1.285 \\[1.5mm]
\textbf{$\mathbf{\sigma}$-score}    & 0.173 & 4.047 & 0.186 & 0.144 & 0.686 & 0.683 & \best{0.002} & \third{0.066}
\\[1.5mm]\bottomrule\\[0.1mm]
\end{tabular}
\caption{Quantitative results on FADE, $e$, $r$, and $\sigma$ metrics. For each metric, best method is marked \best{green}, second best is marked \second{orange}, and third best is marked \third{blue}. Methods based on our \emph{Retinex for Dehazing} approach are marked bold with a $^\ast$ sign.}\label{tab_results}
\end{table*}

For a no-reference assessment, we evaluate the proposed Retinex-based approach by conducting a series of experiments on the dataset provided in \cite{choi_referenceless_2015}, which is publicly available online\footnote{\url{http://live.ece.utexas.edu/research/fog/index.html}}.
This dataset comprises $500$ natural hazy images of varying sizes, fog density and content, and includes most of the typical test images used in most previous works.

We now compare Retinex-based implementations with results obtained on the same dataset by the set of state-of-the-art image dehazing algorithms from the previous section.
The Perceptual Fog Density measure (FADE) proposed in \cite{choi_referenceless_2015} is employed. We also consider three extra quality metrics, introduced in \cite{hautiere_blind_2011}: $e$, $r$, and $\sigma$, reflecting different aspects of the quality of dehazed images, \emph{i.e.} percentage of new visible edges after the enhancement process ($e$), increase of visibility/contrast level ($r$), and percentage of pixels becoming saturated after processing an image ($\sigma$).


We report in Table~\ref{tab_results} the mean of the FADE metric and the $e$, $r$, $\sigma$ coefficients for the aforementioned set of $500$ images. Several interesting conclusions can be drawn. First, notice that in terms of the FADE score, the best-performing technique is DEFADE. However, this is a machine learning approach that was trained to remove fog on the same image set we analyze here. Thus, its good performance is expected. As for the FADE score, Retinex-based methods seem to perform on pair with image dehazing techniques, which verifies the duality proposed in this paper. This is confirmed by the $e$, $r$, $\sigma$ scores, which point to the RSR technique as capable of revealing new visible edges while avoiding to saturate previously unsaturated pixels. Finally, we notice that Histogram Equalization performs poorly when compared to other techniques, confirming that the task of fog removal is substantially different from simple spatially-invariant contrast increasing, and that Retinex on inverted intensities can fulfill that task successfully.

\section{Conclusions}
In this work we have provided a rigorous mathematical proof of the dual relationship connecting the problems of image dehazing and non-uniform illumination separation, showing that applying a Retinex operation on an inverted image followed by inverting the result again provides a dehazed result, and vice versa. Rather than being limited to a particular algorithm, we have formally and experimentally showed that this holds for a wide range of Retinex methods. Qualitative and quantitative experiments showed competitive results when compared to current dehazing algorithms.

\section*{Acknowledgments}
JVC was supported by the Spanish government grant ref. IJCI-2014-19516, and MB by European Research Council, Starting Grant ref. 306337, by the Spanish government grant ref. TIN2015-71537-P, \& by Icrea Academia Award.

{\small
\bibliographystyle{ieee}
\bibliography{dehazing_cvpr_refs}
}

\end{document}